\newtheorem{theorem}{Theorem}
\newtheorem{lemma}{Lemma}
\newtheorem{remark}{Remark}
\icmltitlerunning{Low-Complexity Data-Parallel Earth Mover's Distance Approximations}
\begin{document}

\twocolumn[
\icmltitle{Low-Complexity Data-Parallel Earth Mover's Distance Approximations}

\begin{icmlauthorlist}
\icmlauthor{Kubilay Atasu}{ibm}
\icmlauthor{Thomas Mittelholzer}{hsr}
\end{icmlauthorlist}

\icmlaffiliation{ibm}{IBM Research - Zurich}
\icmlaffiliation{hsr}{HSR Hochschule f\"{u}r Technik, Rapperswil}
\icmlcorrespondingauthor{Kubilay Atasu}{kat@zurich.ibm.com}

\icmlkeywords{Similarity Search, Wasserstein Distance, Earth Mover's Distance, Word Mover's Distance}

\vskip 0.3in
]

\printAffiliationsAndNotice{}  

\begin{abstract}

The Earth Mover's Distance (EMD) is a state-of-the art metric for comparing discrete probability distributions, but its high distinguishability comes at a high cost in computational complexity. Even though linear-complexity approximation algorithms have been proposed to improve its scalability, these algorithms are either limited to vector spaces with only a few dimensions or they become ineffective when the degree of overlap between the probability distributions is high. We propose novel approximation algorithms that overcome both of these limitations, yet still achieve linear time complexity. All our algorithms are data parallel, and thus, we take advantage of massively parallel computing engines, such as Graphics Processing Units (GPUs). On the popular text-based 20 Newsgroups dataset, the new algorithms are four orders of magnitude faster than a multi-threaded CPU implementation of Word Mover's Distance and match its nearest-neighbors-search accuracy. On MNIST images, the new algorithms are four orders of magnitude faster than a GPU implementation of the Sinkhorn's algorithm while offering a slightly higher nearest-neighbors-search accuracy.

\end{abstract}

\section{Introduction}

Earth Mover’s Distance (EMD) was initially proposed in the image retrieval field to quantify the similarity between images~\cite{rubnertg98}. In the optimization theory, a more general formulation of EMD, called Wasserstein distance, has been used extensively to measure the distance between probability distributions~\cite{villani2003}. In statistics, an equivalent measure is known as Mallows distance~\cite{emd_mallows}. 
This paper uses the EMD measure for similary search in image and text databases. 

In the text retrieval domain, an adaptation of EMD, called Word Mover’s Distance (WMD), has emerged as a state-of-the-art semantic similarity metric~\cite{kusnerskw15}. 
WMD captures semantic similarity by using the concept of word embeddings in the computation of EMD. Word embeddings map words into a high-dimensional vector space such that the words that are semantically similar are close to each other. These vectors can be pre-trained in an unsupervised way, e.g., by running the Word2vec algorithm~\cite{mikolovcorr2013} on publicly available data sets. The net effect is that, given two sentences that cover the same topic, but have no words in common, traditional methods, such as cosine similarity, fail to detect the similarity. 
However, WMD detects and quantifies the similarity by taking the proximity between different words into account.

What makes EMD-based approaches attractive is their high search and classification accuracy. However, such an accuracy does not come for free. 
In general, the time complexity of computing these measures grows cubically in the size of the input probability distributions. 
Such a high complexity renders their use impractical for large datasets. Thus, there is a need for low-complexity approximation methods.

EMD can be computed in quadratic time complexity when an $L_1$ ground distance is used~\cite{Ling2007,Gudmundsson07}. 
In addition, approximations of EMD can be computed in linear time by embedding EMD into Euclidean space~\cite{Indyk2003}. 
However, such embeddings result in high distortions in high-dimensional spaces~\cite{Naor2006}. 
An algorithm for computing EMD in the wavelet domain has also been proposed~\cite{ShirdhonkarJ08}, which achieves linear time complexity 
in the size of the input distributions. However, the complexity grows exponentially in the dimensionality of the underlying vector space. 
Thus, both linear-complexity approaches are impractical when the number of dimensions is more than three or four. 
For instance, they are not applicable to WMD because the word vectors typically have several hundred dimensions.

A linear-complexity algorithm for computing approximate EMD distances over high-dimensional vector spaces has also been proposed~\cite{AtasuBigData17}. 
The algorithm, called Linear-Complexity Relaxed Word Mover's Distance (LC-RWMD), 
achieves four orders of magnitude improvement in speed with respect to WMD. 
In addition, on compact and curated text documents, it computes high-quality search results that are comparable to those found by WMD.
Despite its scalability, the limitations of LC-RWMD are not well understood. Our analysis shows that 1) it is not applicable to dense histograms, and 2) its accuracy decreases when comparing probability distributions with many overlapping coordinates. Our main contributions are as follows: 
\begin{itemize}
\itemsep0em 
\item We propose new distance measures that are more robust and provably more accurate than LC-RWMD.
\item We show that the new measures effectively quantify the similarity between dense as well as overlapping probability distributions, e.g., greyscale images.
\item We show that the new measures can be computed in linear time complexity in the size of the input probability distributions: the same as for LC-RWMD.
\item We propose data-parallel algorithms that achieve four orders of magnitude speed-up with respect to state of the art without giving up any search accuracy.
\end{itemize}

\section{Background} \label{sec:background}

EMD can be considered as the discrete version of the Wasserstein distance, and can be used to quantify
the affinity between discrete probability distributions. Each probability distribution is modelled as a histogram,
wherein each bin is associated with a weight and a coordinate in a multi-dimensional vector 
space. For instance, when measuring the distance between greyscale images, the histogram weights are given by
the pixel values and the coordinates are defined by the respective pixel positions (see Fig.~\ref{fig:illustration} (a)).  


The distance between two histograms is calculated as the cost of transforming one into the other. 
Transforming a first histogram into a second one involves moving weights from the bins of the first
histogram into the bins of the second, thereby constructing the second histogram from the first. 
The goal is to minimize the total distance travelled, wherein the pairwise distances between different 
histogram bins are computed based on their respective coordinates. This optimization problem is 
well studied in transportation theory and is the discrete formulation of the Wasserstein distance. 

\begin{figure}[tb!]
  \centering%
  \includegraphics*[width=.8\linewidth]{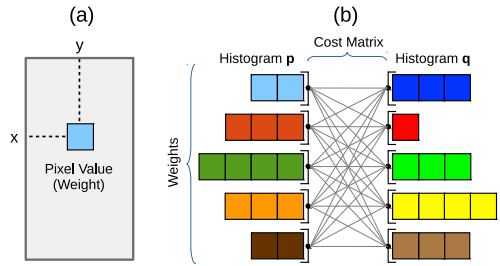}
  \caption{(a) Converting a 28x28 image into a histogram with $h$=28x28=784 bins. The weights are the pixel values and the embedding vectors are the pixel coordinates. 
  (b) Computing the EMD between two flattened histograms for an $h$x$h$ cost matrix $\vect{C}$.}
  \vspace{-0.05in}
  \label{fig:illustration}
\end{figure}

Assume that histograms $\vect{p}$ and $\vect{q}$ are being compared, where $\vect{p}$ has $h_p$ entries and $\vect{q}$ has $h_q$ entries. 
Assume also that an $h_p \times h_q$ nonnegative cost matrix $\vect{C}$ is available.
Note that ${p}_i$ indicates the weight stored in the $i$th bin 
of histogram $\vect{p}$, ${q}_j$ the weight stored in the $j$th bin of histogram $\vect{q}$, and ${C}_{i,j}$ the distance between the coordinates of the $i$th bin of $\vect{p}$ and the $j$th bin of $\vect{q}$ (see Fig.~\ref{fig:illustration} (b)). 
Suppose that the histograms are $L_1$-normalized: $\sum_i {p}_i = \sum_j {q}_j = 1$. 

We would like to discover a non-negative flow matrix $\vect{F}$, where ${F}_{i,j}$ indicates how much of the bin $i$ of $\vect{p}$ has to \textit{flow} to the bin $j$ of $\vect{q}$, such that the cost of moving $\vect{p}$ into $\vect{q}$ is minimized. Formally, the objective of EMD is as follows:
\begin{equation}
\text{EMD}(\vect{p},\vect{q}) = \min_{{F}_{i,j} \geq 0} \sum_{i,j} {F}_{i,j}\cdot{C}_{i,j} . 
\label{eq:objective}
\end{equation}


A valid solution to EMD has to satisfy the so-called \textit{out-flow} (\ref{eq:outflow}) and \textit{in-flow} (\ref{eq:inflow}) constraints.
The \textit{out-flow} constraints ensure that, for each $i$ of $\vect{p}$, the sum of all the flows exiting $i$ is equal to ${p}_i$. 
The \textit{in-flow} constraints ensure that, for each $j$ of $\vect{q}$, the sum of all the flows entering $j$ is equal to ${q}_j$. 
These constraints guarantee that all the mass stored in $\vect{p}$ is transferred and $\vect{q}$ is reconstructed as a result. 
\begin{equation}
\sum_j {F}_{i,j} = {p}_i
\label{eq:outflow}
\end{equation}
\begin{equation}
\sum_i {F}_{i,j} = {q}_j
\label{eq:inflow}
\end{equation}

Computation of EMD requires solution of a minimum-cost-flow problem on a bi-partite graph, wherein 
the bins of histogram $\vect{p}$ are the source nodes, the bins of histogram $\vect{q}$ are the sink nodes, and
the edges between the source and sink nodes indicate the pairwise transportation costs. Solving 
this problem optimally takes supercubical time complexity in the size of the input histograms~\cite{Ahuja1993}.

\subsection{Relaxed Word Mover's Distance}~\label{sec:rwmd}


To reduce the complexity, an approximation algorithm called Relaxed Word Mover's Distance (RWMD) was proposed~\cite{kusnerskw15}. 
RWMD computation involves derivation of two asymmetric distances. 
First, the \textit{in-flow} constraints are relaxed and the relaxed problem is solved using only \textit{out-flow} constraints.
The solution to the first relaxed problem is a lower bound of EMD.
After that, the \textit{out-flow} constraints are relaxed and a second relaxed optimization problem is solved using only \textit{in-flow} constraints, which computes a second lower bound of EMD.
RWMD is the maximum of these two lower bounds. Therefore, it is at least as tight as each one. In addition, it is symmetric. 

Finding an optimal solution to RWMD involves mapping the coordinates of one histogram to the closest coordinates of the other. 
Just like EMD, RWMD requires a cost matrix $\vect{C}$ that stores the pairwise distances between coordinates of $\vect{p}$ and $\vect{q}$. 
Finding the closest coordinates corresponds to row-wise and column-wise minimum operations in the cost matrix (see Fig.~\ref{fig:rwmd}).
To compute the first lower bound, it is sufficient to find the column-wise minimums in the cost matrix, and then 
perform a dot product with the weights stored in $\vect{p}$. Similarly, to compute the second lower bound, it is sufficient 
to find the row-wise minimums and then perform a dot product with the weights stored in $\vect{q}$.
The complexity of RWMD is given by the cost of constructing the cost matrix $\vect{C}$: it requires quadratic 
time and space in the size of the input histograms. Computing the row-wise and column-wise minimums of $\vect{C}$ also has quadratic time complexity.

\begin{figure}[tb!]
  \centering%
  \includegraphics*[width=.5\linewidth]{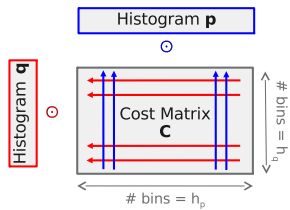}
  \vspace{-0.1in}
  \caption{Quadratic-complexity RWMD computation}
 \label{fig:rwmd}
  \vspace{-0.2in}
\end{figure}




\subsection{Linear-Complexity RWMD (LC-RWMD)}

When computing RWMD between only two histograms, it is not possible to avoid a quadratic time complexity. 
However, in a typical information retrieval system, a query histogram is compared with a large database 
of histograms to identify the top-$\ell$ most similar histograms in the database. 
It is shown in~\cite{AtasuBigData17} that the RWMD computation involves redundant and repetitive
operations in such cases, and that eliminating this redundancy reduces the average time complexity from quadratic to linear. 

Assume that a query histogram is being compared with two database histograms. Assume also
that the two database histograms have common coordinates. A simple replication of the RWMD computation
would involve creation of two cost matrices with identical rows for the common coordinates. 
Afterwards, it would be necessary to perform reduction operations on these identical rows to compute the row-wise minimums. 
It is shown in~\cite{AtasuBigData17} that both of these redundant operations can be eliminated by 
1) constructing a vocabulary that stores the union of the coordinates that occur in the database histograms, 
and 2) computing the minimum distances between the coordinates of the vocabulary and the coordinates of the query only once.

Table~\ref{tab:parameters} lists the algorithmic parameters that influence the 
complexity. Table~\ref{tab:complexity} shows the complexity of RWMD and LC-RWMD 
algorithms when comparing one query histogram with $n$ database histograms. 
When the number of database histograms ($n$) is in the order of the size of the vocabulary 
($v$), the LC-RWMD algorithm reduces the complexity by a factor of the average histogram 
size ($h$). Therefore, whereas the time complexity of a brute-force RWMD implementation 
scales quadratically in the histogram size, the time complexity of LC-RWMD scales only linearly. 

\begin{table}[t!]
\caption{Algorithmic Parameters}
\label{tab:parameters}
\begin{center}
\begin{tabular}{lc}
  \hline
  $n$           & Number of database histograms\\
  $v$           & Size of the vocabulary\\ 
  $m$           & Dimensionality of the vectors \\ 
  $h$           & Average histogram size \\ 
  \hline
\vspace{-0.4in}
\end{tabular}
\end{center}
\end{table}

\begin{table}[t!]
\caption{Complexity of Computing $n$ RWMD Distances}
\label{tab:complexity}
\begin{center}
\begin{tabular}{ccc}
  \hline
              & Time Complexity & Space Complexity \\ 
  \hline
  LC-RWMD     & $O(vhm+nh)$        & $O(nh+vm+vh)$ \\
  RWMD        & $O(nh^2m)$         & $O(nhm)$ \\  
  \hline
  \vspace{-0.4in}
\end{tabular}
\end{center}
\end{table}


\section{Related Work}


A regularized version of the optimal transport problem can be solved more efficiently than network-flow-based approaches~\cite{Cuturi13}. 
The solution algorithm is based on Sinkhorn's matrix scaling technique~\cite{Sinkhorn}, and thus, it is called Sinkhorn's algorithm. 
Convolutional implementations can be used to reduce the time complexity of Sinkhorn's algorithm~\cite{Solomon2015}, e.g., when operating on images.
Given an error term $\epsilon$, the time complexity of Sinkhorn's algorithm is $O((h^2\log{h})/\epsilon^3)$ when computing the distance between histograms of size $O(h)$~\cite{Altschuler2017}.
In addition, a cost matrix has to be constructed, which incurs an additional complexity of $O(h^2m)$ when using $m$-dimensional coordinates.

Several other lower bounds of EMD have been proposed~\cite{assent2008efficient,xu2010efficient,ruttenberg2011indexing,wichterich2008efficient,xu2016emd,huang2016heads,huang2014melody}.
These lower bounds are typically used to speed-up the EMD computation based on pruning techniques.
Alternatively, EMD can be computed approximately using a compressed representation~\cite{Uysal2016approximation,Pele2009}. 

A greedy network-flow-based approximation algorithm has also been proposed~\cite{Gottschlich2014}, 
which does not relax the \textit{in-flow} or \textit{out-flow} constraints. Therefore, it is not a data-parallel algorithm 
and its complexity is quadratic in the histogram size. In addition, it produces an upper bound rather than a lower bound of EMD. 

\section{New Relaxation Algorithms}


In this section, we describe improved relaxation algorithms that address the weaknesses of the RWMD measure and its linear-complexity implementation (LC-RWMD). 
Assume that we are measuring the distance between two histograms $\vect{p}$ and $\vect{q}$. 
Assume also that the coordinates of the two histograms fully overlap but the respective weights are different (see Fig.~\ref{fig:same_coordinates}). 
In other words, for each coordinate $i$ of $\vect{p}$, there is an identical coordinate $j$ of $q$, for which $\vect{C}_{i,j}=0$. 
Therefore, RWMD estimates the total cost of moving $\vect{p}$ into $\vect{q}$ and vice versa as zero even though $p$ and $q$ are 
not the same. This condition arises, for instance, when we are dealing with dense histograms. In other cases, the data of interest 
might actually be sparse, but some background noise might also be present, which results in denser histograms. 

\begin{figure}[tb!]
  \centering%
  \includegraphics*[width=0.8\linewidth]{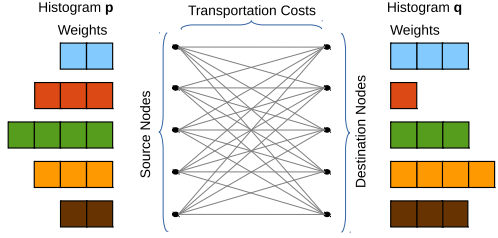}
  \vspace{-0.15in}
  \caption{Different histograms with identical coordinates}
  \vspace{-0.15in}
  \label{fig:same_coordinates}
\end{figure}

In general, the more overlaps there are between the coordinates of $\vect{p}$ and $\vect{q}$, the higher 
the approximation error of RWMD is. The main reason for the error is that when coordinate 
$i$ of $\vect{p}$ overlaps with coordinate $j$ of $\vect{q}$, RWMD does not take into account the fact that the
respective weights $p_i$ and $q_j$ can be different. In an optimal solution, we would not be moving
a mass larger than the minimum of $p_i$ and $q_j$ between these two coordinates. This is a fundamental 
insight that we use in the improved solutions we propose.

Given $\vect{p}$, $\vect{q}$ and $\vect{C}$, our goal is to define new distance measures that relax fewer EMD constraints than RWMD,
and therefore, produce tighter lower bounds on EMD. Two asymmetric distances can be computed by deriving 1) the cost of moving 
$\vect{p}$ into $\vect{q}$ and 2) the cost of moving $\vect{q}$ into $\vect{p}$. If both are lower bounds on $\text{EMD}(\vect{p},\vect{q})$, 
a symmetric lower bound can be derived, e.g., by using the maximum of the two. Thus, we consider only the computation of the cost of moving $\vect{p}$ into $\vect{q}$ without loss of essential generality.

When computing the cost of moving $\vect{p}$ to $\vect{q}$ using RWMD, the \textit{in-flow} constraints of (\ref{eq:inflow}) are removed. 
In other words, all the mass is transferred from $\vect{p}$ to the coordinates of $\vect{q}$, but the resulting distribution is not the same as $\vect{q}$.
Therefore, the cost of transforming $\vect{p}$ to $\vect{q}$ is underestimated by RWMD. To achieve better approximations of $\text{EMD}(\vect{p},\vect{q})$, instead of 
removing the \textit{in-flow} constraints completely, we propose the use of a relaxed version of these constraints: 
\begin{equation}
F_{i,j} \leq {q}_j \quad \text{for all $i, j$} .
\label{eq:relaxed}
\end{equation} 
The new constraint ensures that the amount of weight that can be moved from a coordinate $i$ of $\vect{p}$ to a coordinate $j$ of $\vect{q}$ cannot 
exceed the weight ${q}_j$ at coordinate $j$. However, even if (\ref{eq:relaxed}) is satisfied, the total weight moved to coordinate $j$ of $\vect{q}$ from all the coordinates of $\vect{p}$ can exceed ${q}_j$, potentially violating (\ref{eq:inflow}). Namely,  (\ref{eq:inflow}) implies (\ref{eq:relaxed}), but not vice versa.

When (\ref{eq:relaxed}) is used in combination with (\ref{eq:outflow}), we have:
\begin{equation}
F_{i,j} \leq \min{(p_i,q_j)} \quad \text{for all $i, j$} .
\label{eq:relaxed2}
\end{equation} 
Note that we are essentially imposing capacity constraints on the edges of the flow network (see Fig.\ref{fig:example}) based on (\ref{eq:relaxed2}).

\begin{figure}[tb!]
  \centering%
  \includegraphics*[width=0.80\linewidth]{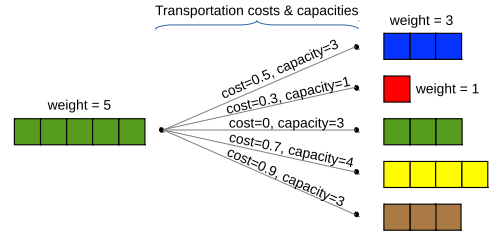}
  \vspace{-0.15in}
  \caption{Imposing capacity constraints on the edges}
  \vspace{-0.15in}
  \label{fig:example}
\end{figure}



We would like to stress that in the framework of this work, which considers the discrete and not the 
continuous case of Wasserstein distances, the only requirement on the cost matrix is that it is nonnegative. 
Since any nonnegative cost $c$ between two locations can be written as the $p$-th power of the $p$-th 
root of $c$ for $p \geq 1$, one can assume that we are dealing with a $p$-th Wasserstein distance~\cite{Villani}.

In the following subsections, we describe three new approximation methods. 
The Overlapping Mass Reduction (OMR) method imposes the relaxed constraint (\ref{eq:relaxed}) only between overlapping coordinates, and is the lowest-complexity and the least accurate approximation method.
The Iterative Constrained Transfers (ICT) method imposes constraint (\ref{eq:relaxed}) between all coordinates of $\vect{p}$ and $\vect{q}$, and is the most complex and most accurate approximation method.
The Approximate Iterative Constrained Transfers (ACT) method imposes constraint (\ref{eq:relaxed}) incrementally between coordinates of $\vect{p}$ and $\vect{q}$, and is an approximation of the ICT method.
Therefore, both its complexity and its accuracy are higher than those of OMR, but lower than those of ICT.

\subsection{Overlapping Mass Reduction}

The OMR method imposes (\ref{eq:relaxed}) only between overlapping coordinates.
The main intuition behind OMR method is that if the coordinate $i$ of $\vect{p}$ and the coordinate $j$ of $\vect{q}$ overlap (i.e., ${C}_{i,j}=0$), 
a transfer of $\min({p}_i,{q}_j)$ can take place free of cost between $\vect{p}$ of $\vect{q}$. After that, the remaining weight in $p_i$ is transferred simply to the second closest coordinate in $\vect{q}$ 
as this is the next least costly move. Therefore, the method computes only the top-2 smallest values in each row of $\vect{C}$.
A detailed description is given in Algorithm~\ref{alg:omr}.


\begin{algorithm}
\begin{algorithmic}[1]
	\Function{OMR}{$\vect{p}, \vect{q}, \vect{C}$}
	\State $t = 0$\ \Comment{initialize transportation cost $t$} 
	\For{$i=1\ldots,h_p$}\ \Comment{iterate the indices of $\vect{p}$}
                \State $\vect{s} =\text{argmin}_{2}({C}_{i,[1 \ldots h_q]})$ \Comment{find top-$2$ smallest}
		\If{${C}_{i,\vect{s}[1]}==0$}\ \Comment{if the smallest value is 0}
			\State{$r = \min({p}_i, {q}_{\vect{s}[1]})$}\ \Comment{size of max. transfer} 
			\State{${p}_i= {p}_i - r$}\ \Comment{move $r$ units of ${p}_i$ to ${q}_{\vect{s}[1]}$} 
			\State{$t = t + {p}_i \cdot {C}_{i,\vect{s}[2]}$}\ \Comment{move the rest to ${q}_{\vect{s}[2]}$}  
		\Else
			\State{$t = t + {p}_i \cdot {C}_{i,\vect{s}[1]}$}\ \Comment{move all of ${p}_i$ to ${q}_{\vect{s}[1]}$}  
		\EndIf
	\EndFor
	\State \Return $t$\ \Comment{return transportation cost $t$}
	\EndFunction
\end{algorithmic}
\caption{Optimal Computation of OMR}
\label{alg:omr}
\end{algorithm}


\subsection{Iterative Constrained Transfers}

The ICT method imposes the constraint (\ref{eq:relaxed}) between all coordinates of $\vect{p}$ and $\vect{q}$. 
The main intuition behind the ICT method is that because the inflow constraint (\ref{eq:inflow}) is relaxed, the optimal flow exiting each source node can be determined independently.
For each source node, finding the optimal flow involves sorting the destination nodes in the ascending order of transportation costs, and then performing iterative mass transfers 
between the source node and the sorted destination nodes under the capacity constraints (\ref{eq:relaxed}). Algorithm~\ref{alg:ict} describes the ICT method in full detail.


\begin{algorithm}
\begin{algorithmic}[1]
	\Function{ICT}{$\vect{p}, \vect{q}, \vect{C}$}
	\State $t = 0$\ \Comment{initialize transportation cost $t$} 
	\For{$i=1\ldots,h_p$}\ \Comment{iterate the indices of $\vect{p}$}
		\State $\vect{s}=\text{argsort}({C}_{i,[1 \ldots h_q]})$ \Comment{sort indices by value}
		\State $l=1$\ \Comment{initialize $l$}
		\While{${p}_i > 0$}\ \Comment{while there is mass in ${p}_i$}
			\State{$r = \min({p}_i,{q}_{\vect{s}[l]})$}\ \Comment{size of max. transfer} 
			\State{${p}_i= {p}_i - r$}\ \Comment{move $r$ units of ${p}_i$ to ${q}_{\vect{s}[l]}$} 
			\State{$t = t + r \cdot {C}_{i,\vect{s}[l]}$}\ \Comment{update cost}  
			\State{$l=l+1$}\ \Comment{increment $l$} 
		\EndWhile
	\EndFor
	\State \Return $t$\ \Comment{return transportation cost $t$}
	\EndFunction
\end{algorithmic}
\caption{Optimal Computation of ICT}
\label{alg:ict}
\end{algorithm}
\vspace{-0.05in}

Algorithm~\ref{alg:aict} describes an approximate solution to ICT (ACT), which offers the possibility to terminate the ICT iterations before all the mass is transferred from $\vect{p}$ to $\vect{q}$.
After performing a predefined number $k-1$ of ICT iterations, the mass remaining in $\vect{p}$ is transferred to the $k$-th closest coordinates of $\vect{q}$, making the solution approximate. 


Theorem~\ref{thm:ict} establishes the optimality of Algorithm~\ref{alg:ict}. Theorem~\ref{theorem:ordering} establishes the relationship between different distance measures. 
The proofs and the derivation of the complexity of the algorithms are omitted for brevity.

\begin{theorem} \label{thm:ict}
  (i) The flow $F^*$ of Algorithm~\ref{alg:ict} is an optimal solution of the relaxed minimization problem given by (\ref{eq:objective}), (\ref{eq:outflow}) and (\ref{eq:relaxed}).
  (ii) ICT provides a lower bound on EMD.
\end{theorem}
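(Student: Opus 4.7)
My plan is to prove (i) by showing that the relaxed problem decomposes across source nodes and that the greedy procedure of Algorithm~\ref{alg:ict} solves each per-source subproblem optimally, and to prove (ii) by a simple feasibility-inclusion argument. I would structure the proof so that part (ii) is an immediate consequence of the definitions, with all the actual work sitting in part (i).

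For (i), the first observation I would make is that both the outflow equality (\ref{eq:outflow}), $\sum_j F_{i,j}=p_i$, and the relaxed inflow constraint (\ref{eq:relaxed}), $F_{i,j}\le q_j$, are indexed by a single source $i$ on the source side; no constraint couples rows $F_{i,\cdot}$ and $F_{i',\cdot}$ for $i\ne i'$. Hence the relaxed LP splits into $h_p$ independent subproblems, one per row $i$:
\[
\min \sum_j F_{i,j}\,C_{i,j}\quad\text{s.t.}\quad \sum_j F_{i,j}=p_i,\ \ 0\le F_{i,j}\le q_j.
\]
Each is a continuous single-source transportation problem, equivalently a fractional/bounded knapsack. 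The greedy rule executed by Algorithm~\ref{alg:ict} — sort the destinations by $C_{i,\cdot}$ in ascending order and assign $\min(\text{remaining }p_i,\,q_{\vect{s}[l]})$ in turn — is the classical fractional-knapsack scheme. I would prove its optimality by a standard exchange argument: given any optimal $F^{\mathrm{opt}}_{i,\cdot}$, if some destination $j$ with smaller cost is not saturated at its cap $q_j$ while a positive amount is sent to a costlier destination $j'$, transfer $\varepsilon>0$ of flow from $(i,j')$ to $(i,j)$; the outflow sum is preserved, the cap at $j$ is still respected, and the objective weakly decreases by $\varepsilon(C_{i,j'}-C_{i,j})$. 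Iterating such swaps (and handling ties arbitrarily) transforms $F^{\mathrm{opt}}_{i,\cdot}$ into the greedy solution without cost increase, so the greedy value is optimal. Summing the per-row optima, $F^*$ is optimal for the full relaxed LP. The complexity claim would then follow from the cost of the sort plus at most $h_q$ transfers per row.

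For (ii), I would argue pure set inclusion. Any $F$ feasible for EMD satisfies (\ref{eq:outflow}), so already fulfills the outflow condition of the relaxed LP; it also satisfies $F_{i,j}\ge 0$ together with (\ref{eq:inflow}), giving
\[
F_{i,j}\ \le\ \sum_{i'} F_{i',j}\ =\ q_j,
\]
which is exactly (\ref{eq:relaxed}). Thus the EMD feasible set is contained in the relaxed feasible set, the two objectives coincide, and therefore $\mathrm{ICT}(\vect{p},\vect{q}) \le \mathrm{EMD}(\vect{p},\vect{q})$.

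The hard part is not part (ii), which is a one-line argument, but the bookkeeping for the exchange step in part (i). Two subtleties I would be careful about: first, the partial transfer that exactly exhausts $p_i$ midway through some destination (the loop in Algorithm~\ref{alg:ict} must be argued not to overshoot, which follows from the $\min$ in line~7); and second, ties in $C_{i,\cdot}$, which force me to state optimality as \emph{some} optimal solution coincides with the greedy output rather than uniqueness. Once those are handled cleanly, both parts close up.
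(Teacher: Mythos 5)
Your proposal is correct and follows essentially the same route as the paper: both decompose the relaxed problem into independent per-source (row) subproblems, both rely on the structural fact that the greedy rule saturates the caps $q_{\vect{s}[l]}$ in ascending cost order (the paper's Lemma on $F^*$), and both obtain (ii) from the inclusion of the EMD feasible set in the relaxed feasible set. The only cosmetic difference is that you certify per-row greedy optimality by an exchange argument that morphs an optimal solution into the greedy one, whereas the paper directly lower-bounds the cost of an arbitrary feasible row flow via a chain of inequalities; the two arguments are interchangeable here.
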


\begin{theorem} \label{theorem:ordering}
  For two normalized histograms $\vect p$ and $\vect q$:
  $\text{RWMD}(\vect{p},\vect{q}) \leq  \text{OMR}(\vect{p},\vect{q}) \leq \text{ACT}(\vect{p},\vect{q}) \leq \text{ICT}(\vect{p},\vect{q}) \leq \text{EMD}(\vect{p},\vect{q})$.
\end{theorem}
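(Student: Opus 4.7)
The plan is to interpret each of the five quantities as the cost of a flow configuration in the bipartite graph of Fig.~\ref{fig:illustration}, and to exhibit, for consecutive pairs, either an inclusion of feasible sets (which flips to the desired inequality between minima) or a direct per-source bound. I will carry out the argument first for the asymmetric direction $\vect p \to \vect q$ and then pass to the symmetric versions using the fact that $a_i \leq b_i$ for $i=1,2$ implies $\max(a_1,a_2)\leq\max(b_1,b_2)$.

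For $\text{ICT}\leq\text{EMD}$, I would invoke Theorem~\ref{thm:ict}(ii) directly. For $\text{ACT}\leq\text{ICT}$, the key observation is that once the in-flow constraint (\ref{eq:inflow}) is replaced by the per-pair capacity (\ref{eq:relaxed}), the problem decouples over source indices because no constraint any longer couples different rows of $\vect F$. For each source $i$, ACT keeps the capacity constraints only at the first $k-1$ sorted destinations $s[1],\dots,s[k-1]$ and therefore solves a relaxation of the per-row ICT problem; a greedy water-filling argument analogous to Theorem~\ref{thm:ict}(i) identifies Algorithm~\ref{alg:aict} with the optimum of this relaxed LP. Summing over $i$ yields $\text{ACT}\leq\text{ICT}$.

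For $\text{OMR}\leq\text{ACT}$ and $\text{RWMD}\leq\text{OMR}$, I would argue per source $i$ using the monotonicity $C_{i,s[1]}\leq C_{i,s[2]}\leq\dots$. If $C_{i,s[1]}>0$, then OMR contributes $p_i\,C_{i,s[1]}$, ACT distributes $p_i$ over $s[1],\dots,s[k]$ at per-unit costs all $\geq C_{i,s[1]}$ and so contributes at least $p_i\,C_{i,s[1]}$, while RWMD contributes exactly $p_i\,C_{i,s[1]}$. If $C_{i,s[1]}=0$, set $r=\min(p_i,q_{s[1]})$: OMR contributes $(p_i-r)\,C_{i,s[2]}$, the first greedy step of ACT also places exactly $r$ at $s[1]$ and then spreads the residual $p_i-r$ over $s[2],\dots,s[k]$ at per-unit costs $\geq C_{i,s[2]}$, giving a contribution of at least $(p_i-r)\,C_{i,s[2]}$; RWMD contributes $0 \leq (p_i-r)\,C_{i,s[2]}$. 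Summing over $i$ closes each inequality in the asymmetric direction, and the same bounds in the $\vect q\to\vect p$ direction combined under $\max$ transfer the ordering to the symmetric measures.

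The step I expect to require the most care is $\text{OMR}\leq\text{ACT}$, because OMR is not the optimum of a clean LP relaxation of ICT — Algorithm~\ref{alg:omr} restricts the flow out of each source to at most two destinations — so a purely ``nested feasible sets'' argument is unavailable. The per-source bound relying on the non-decreasing costs $C_{i,s[l]}$ and on matching the single overlap step of OMR against the first greedy iteration of ACT is what carries this inequality; the remaining pieces are essentially bookkeeping together with the one-line invocation of Theorem~\ref{thm:ict}(ii) at the EMD end of the chain.
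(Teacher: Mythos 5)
Your chain of inequalities is correct, and the upper half of your argument ($\text{ACT}\leq\text{ICT}\leq\text{EMD}$ via feasible-set inclusion and Theorem~\ref{thm:ict}(ii)) coincides with the paper's. Where you diverge is the lower half: the paper's proof (Appendix~\ref{sec:optimality}) runs the \emph{same} nested-relaxation argument all the way down, asserting that ``the constrained minimization problems for ICT, ACT, OMR, RWMD form a chain of increased relaxations of EMD,'' whereas you switch to explicit per-source cost comparisons for $\text{RWMD}\leq\text{OMR}\leq\text{ACT}$. Your stated reason for the switch --- that OMR is not the optimum of a clean LP relaxation because Algorithm~\ref{alg:omr} routes each source to at most two destinations --- is a slight misreading: OMR \emph{is} defined as the optimum of the LP with out-flow constraints plus the capacity constraint (\ref{eq:relaxed}) imposed only on zero-cost (overlapping) edges, and Algorithm~\ref{alg:omr} computes that optimum (the two-destination structure is a consequence of optimality, not a constraint). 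Since a zero-cost edge is necessarily the row minimum $s[1]$, it lies among the capacitated destinations of ACT whenever $k\geq 2$, so OMR's feasible set contains ACT's and the inclusion argument goes through uniformly; RWMD (no capacity constraints at all) sits at the bottom of the chain for the same reason. That said, your direct per-row bounds are valid and arguably more self-contained --- they make visible exactly where each inequality can be strict (namely at rows with $C_{i,s[1]}=0$), which the abstract inclusion argument hides --- and your explicit handling of the symmetrization via monotonicity of $\max$ fills in a step the paper leaves implicit. One small caveat worth recording in either approach: $\text{OMR}\leq\text{ACT}$ requires $k\geq 2$, since ACT with $k=1$ degenerates to RWMD.
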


\begin{algorithm}
\begin{algorithmic}[1]
	\Function{ACT}{$\vect{p}, \vect{q}, \vect{C}, k$}
	\State $t = 0$\ \Comment{initialize transportation cost $t$} 
	\For{$i=1\ldots,h_p$}\ \Comment{iterate the indices of $\vect{p}$}
		\State $\vect{s}=\text{argmin}_{k}({C}_{i,[1 \ldots h_q]})$ \Comment{find top-$k$ smallest}
		\State $l=1$\ \Comment{initialize $l$}
		\While{$l < k$}\
			\State{$r = \min({p}_i, {q}_{\vect{s}[l]})$}\ \Comment{size of max. transfer} 
			\State{${p}_i= {p}_i - r$}\ \Comment{move $r$ units of ${p}_i$ to ${q}_{\vect{s}[l]}$} 
			\State{$t = t + r \cdot {C}_{i,j}$}\ \Comment{update cost}  
			\State{$l=l+1$}\ \Comment{increment $l$} 
		\EndWhile
		\If{${p}_{i} \neq 0$}\ \Comment{if ${p}_i$ still has some mass}
			\State{$t = t + {p}_i \cdot{C}_{i,\vect{s}[k]}$}\ \Comment{move the rest to ${q}_{\vect{s}[k]}$} 
		\EndIf
	\EndFor
	\State \Return $t$\ \Comment{return transportation cost $t$}
	\EndFunction
\end{algorithmic}
\caption{Approximate Computation of ICT}
\label{alg:aict}
\end{algorithm}
\vspace{-0.05in}

\section{Linear-Complexity Implementations} \label{sec:linear-complexity}

In this section, we focus on the ACT method because 1) it is a generalization of all the other methods presented, 
and 2) its complexity and accuracy can be controlled by setting the number $k$ of iterations  performed. 
We describe a data-parallel implementation of ACT, which achieves a linear time complexity when 
$k$ is a constant. Unlike the previous section, we do not assume that the cost matrix is given. We 
compute the transportation costs on the fly, and take into account the complexity of computing these costs as well.

A high-level view of the linear-complexity ACT algorithm (LC-ACT) is given in Figure~\ref{fig:lc-ict}. 
LC-ACT is strongly inspired by LC-RWMD. Just like LC-RWMD, it assumes that 
1) a query histogram is compared with a large number of database histograms, and 
2) the coordinate space is populated by the members of a fixed-size vocabulary. 
The complexity is reduced by eliminating the redundant and repetitive operations that 
arise when comparing one query histogram with a large number of database histograms. 

\begin{figure}[tb!]
  \centering%
  \includegraphics*[width=0.95\linewidth]{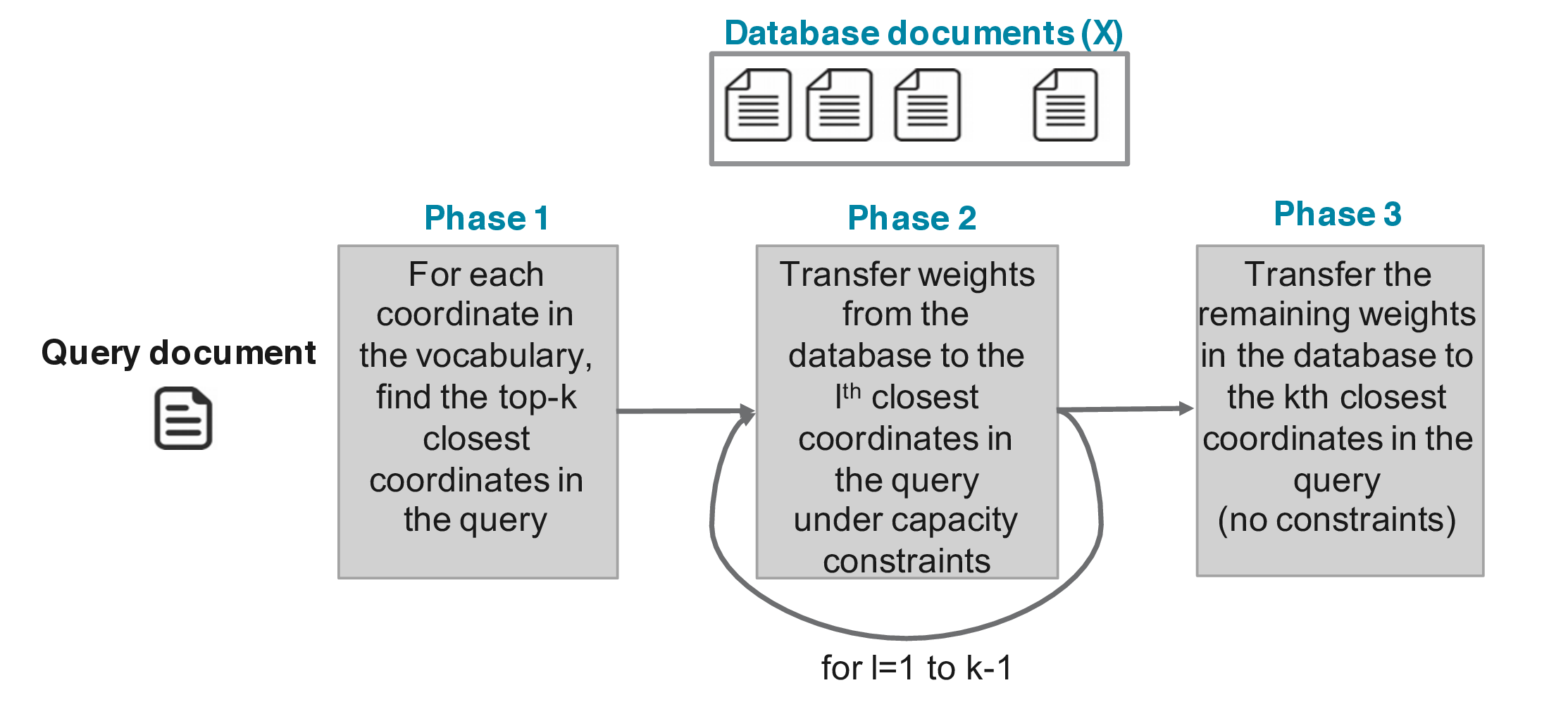}
  \vspace{-0.20in}
  \caption{Linear Complexity ACT}
  \vspace{-0.05in}
  \label{fig:lc-ict}
\end{figure}



Suppose that the dimension of the coordinates is $m$ and the size of the vocabulary is $v$. 
Let $\vect{V}$ be an $v \times m$ matrix that stores this information. Given a query histogram $\vect{q}$ of size $h$,
we construct a matrix $\vect{Q}$ of size $h \times m$ that stores the coordinates of the histogram
entries. Phase 1 of LC-ACT (see Fig.~\ref{fig:phase1}) performs a matrix-matrix multiplication 
between $\vect{V}$ and the transpose of $\vect{Q}$ to compute all pairwise distances between the coordinates of 
the vocabulary and the coordinates of the query. The result is a $v \times h$ distance matrix, denoted by $\vect{D}$.
As a next step, the top-$k$ smallest distances are computed in each row of $\vect{D}$. The result
is stored in a $v \times k$ matrix $\vect{Z}$. Furthermore, we store the indices of $\vect{q}$ that are associated
with the top-$k$
smallest distances in a $v \times k$ matrix $\vect{S}$. We can then construct another $v \times k$ 
matrix $\vect{W}$, which stores the corresponding weights of $\vect{q}$ by defining $\vect{W}_{i,l}=\vect{q}_{{S}_{i,l}}$ for $i=1, \ldots, v$ and $l=1, \ldots, k$.
The matrices $\vect{Z}$ and $\vect{W}$ are then used in Phase 2 to transport the largest possible
mass, which are constrained by $\vect{W}$, to the smallest possible distances, which are given by $\vect{Z}$.

The database histograms are stored in a matrix $\vect{X}$ (see Fig.~\ref{fig:phase2}), wherein each row 
stores one histogram. These histograms are typically sparse. Thus, the matrix $\vect{X}$ is 
stored using a sparse representation, e.g., in compressed sparse rows (csr) format. For simplicity, 
assume that $\vect{X}$ is stored in a dense format and $\vect{X}_{u,i}$ stores the weight of the $i$-th coordinate 
of the vocabulary in the $u$-th database histogram. Note that if the histograms have $h$ entries on average, 
the number of nonzeros of the matrix $\vect{X}$ would be equal to $nh$. 

\begin{figure}[tb!]
  \centering%
  \includegraphics*[width=0.9\linewidth]{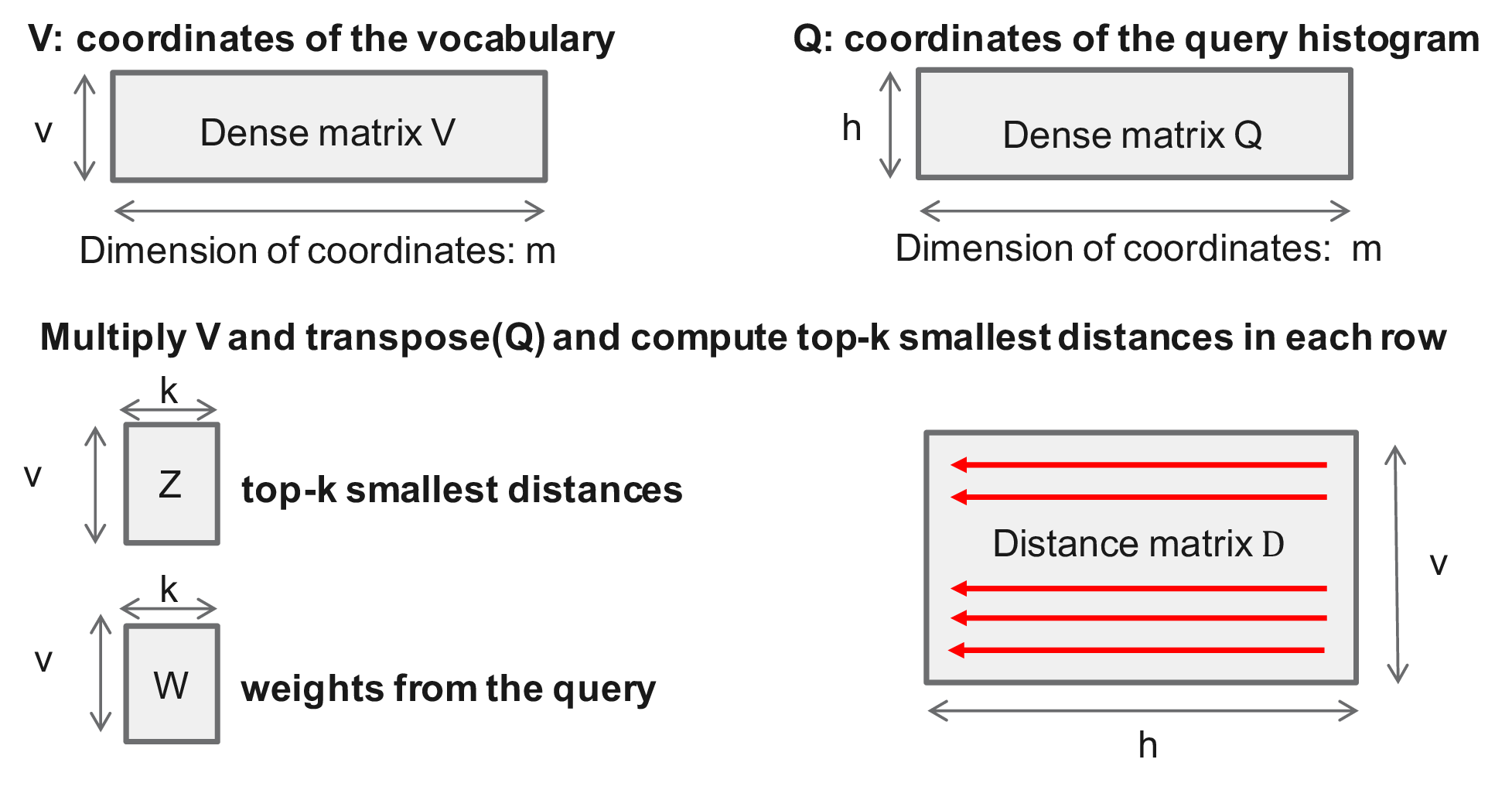}
  \vspace{-0.2in}
  \caption{Phase 1 of LC-ACT}
  \vspace{-0.1in}
  \label{fig:phase1}
\end{figure}

\begin{figure}[tb!]
  \centering%
  \includegraphics*[width=0.9\linewidth]{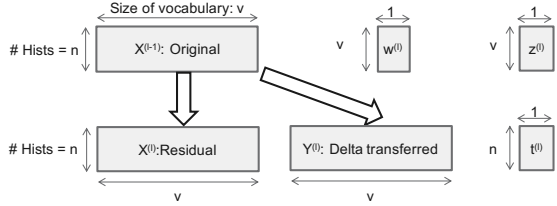}
  \vspace{-0.2in}
  \caption{Phase 2 of LC-ACT}
  \vspace{-0.1in}
  \label{fig:phase2}
\end{figure}

Phase 2 of ACT iterates the columns of $\vect{Z}$ and $\vect{W}$ and iteratively transfers weights from the database histograms $\vect{X}$ to the query histogram $q$. 
Let $\vect{X}^{(l)}$ represent the residual mass remaining in $\vect{X}$ after $l$ iterations, where $\vect{X}^{(0)}=\vect{X}$. 
Let $\vect{Y}^{(l)}$ store the amount of mass that is transferred from $\vect{X}^{(l-1)}$ in iteration $l$, which is  the difference between $\vect{X}^{(l-1)}$ and $\vect{X}^{(l)}$. 
Let $\vect{z}^{(l)}$ and $\vect{w}^{(l)}$ be the $l$-th columns of $\vect{Z}$ and $\vect{W}$, respectively; thus, 
$\vect{z}^{(l)}_{u}$ is the $l$-th smallest distance between the coordinate $u$ of the vocabulary and the coordinates of the query, 
and $\vect{w}^{(l)}_{u}$ is the respective weight of the query coordinate that produces the $l$-th smallest distance. 
The iteration $l$ of Phase 2 computes $\vect{Y}^{(l)}$ and $\vect{X}^{(l)}$:  
\vspace{-0.1in}
\begin{equation}
  \vect{Y}^{(l)}_{u,i} = \min_{
	\scriptsize 
        \begin{array}{c}
        u{\in}\{1 \ldots v\} \\
        i{\in}\{1 \ldots v\}
        \end{array}}  (\vect{X}^{(l-1)}_{u,i}, \vect{w}^{(l)}_{u}) . 
\label{eq:derivey}
\end{equation}
\vspace{-0.1in}
\begin{equation}
\vect{X}^{(l)} = \vect{X}^{(l-1)} - \vect{Y}^{(l)}. 
\label{eq:derivex}
\end{equation}
The cost of transporting $\vect{Y}^{(l)}$ to $q$ is given by $\vect{Y}^{(l)} \cdot \vect{z}^{(l)}$.
Let $\vect{t}^{(l)}$ be a vector of size $n$ that accumulates all the transportation 
costs incurred between iteration $1$ and iteration $l$:
\begin{equation}
\vect{t}^{(l)} = \vect{t}^{(l-1)} + \vect{Y}^{(l)} \cdot \vect{z}^{(l)}.
\label{eq:derivet1}
\end{equation}
After $k-1$ iterations of Phase 2, there might still be some mass remaining in $\vect{X}^{(l-1)}$.
Phase 3 approximates the cost of transporting the remaining mass to $q$ by multiplying $\vect{X}^{(l-1)}$ 
with $\vect{z}^{(k)}$. The overall transportation cost $\vect{t}^{(k)}$ is:
\begin{equation}
\vect{t}^{(k)} = \vect{t}^{(k-1)} + \vect{X}^{(l-1)} \cdot \vect{z}^{(k)}.
\label{eq:derivet2}
\end{equation}

The main building blocks of LC-ACT are matrix-matrix and matrix-vector multiplications, 
row-wise top-$k$ calculations, and parallel element-wise updates, all of which are data-parallel 
operations. Table~\ref{tab:complexity2} shows the complexity of computing LC-ACT between one 
query histogram and $n$ database histograms. Note that when $k$ is a constant, LC-ACT and LC-RWMD methods have the same complexity. 

\begin{table}[t!]
\caption{Complexity of LC-ACT ($n$ distances, $k$ iterations)}
  \vspace{-0.05in}
\label{tab:complexity2}
\begin{center}
\begin{tabular}{ll}
  \hline
  Time & Space \\ 
  \hline
  $O(vhm+nhk)$    & $O(nh+vm+vh+vk)$ \\
  \hline
  \vspace{-0.35in}
\end{tabular}
\end{center}
\end{table}



\section{Evaluation}

We performed experiments on two public datasets: \emph{20 Newsgroups} 
is a text database of newsgroup documents, partitioned evenly across 20 different classes\footnote{http://qwone.com/~jason/20Newsgroups/}, 
and \emph{MNIST} is an image database of 
greyscale hand-written digits that are partitioned evenly across 10 classes\footnote{http://yann.lecun.com/exdb/mnist/}. 
The MNIST images are mapped to histograms as illustrated in Fig.~\ref{fig:illustration}, 
wherein the weights are normalized pixel values and the embedding vectors indicate the coordinates of the pixels.
The words in 20 Newsgroups documents are mapped to a 300-dimensional real-valued embedding 
space using Word2Vec vectors that are pre-trained on Google News\footnote{https://code.google.com/archive/p/word2vec/}, 
for which the size of the vocabulary ($v$) is 3M words and phrases. 
In our setup, each histogram bin is associated with a word from this vocabulary. 
The first 100 words of the vocabulary are treated as stop words. These stop words 
and the Word2Vec phrases are not mapped to histogram bins.
The histogram weights indicate normalized frequencies of words found in each document. 
In addition, 20 Newsgroups histograms are truncated to store only the most-frequent 500 words found in each document.
Tab.~\ref{tab:benchmarks} shows some properties of the datasets and the results of our preprocessing.
Note that the size of the vocabulary used has an impact on the complexity of our methods (see Tab.~\ref{tab:complexity2}).

The results we provide in the remainder of the text are associated 
with linear complexity implementations of RWMD, OMR, and ACT methods.
To improve the robustness of these methods, we compute two asymmetric 
lower bounds and take the maximum of the two as discussed in Section~\ref{sec:rwmd}. 
Word2Vec embedding vectors are $L_2$-normalized, but MNIST embedding 
vectors are not normalized in our setup. In addition, when computing 
our approximations, the histogram weights are always 
$L_1$-normalized. Lastly, the transportation cost between two words or 
pixels is the Euclidean ($L_2$) distance between their embedding vectors.

\begin{table}[t!]
\vspace{-0.1in}
\caption{Dataset properties: no. docs ($n$), average size of histograms ($h$), original vocabulary size ($v$), size of used vocabulary}
\label{tab:benchmarks}
\begin{center}
\begin{tabular}{lcccc}
  \hline
                 & Size $n$ & Average $h$ & Original $v$ & Used $v$ \\ 
  \hline
  20 News        & 18828  & 78.8  & 3M  & 69682  \\
  MNIST          & 60000  & 149.9 & 784 & 717 \\
  \hline
  \vspace{-0.3in}
\end{tabular}
\end{center}
\end{table}

In our experiments, we treated each document of the database as a query and compared it with 
every other document in the database. Based on the distance measure used in the 
comparison, for each query document, we identified the top-$\ell$ nearest neighbors in the 
database. After that, for each query document, we computed the percentage of documents 
in its nearest-neighbors list that have the same label.
We averaged this metric over all the query documents and computed it as a function 
of $\ell$. The result is the average \emph{precision @ top-$\ell$} for the query documents,
and indicates the expected accuracy of nearest neighbors search queries.

We compared our new distance measures with simple baselines, such as, Bag-of-Words (BoW) 
cosine similarity and the Word Centroid Distance (WCD)~\cite{kusnerskw15} measures, both 
of which exhibit a lower algorithmic complexity than our methods. The BoW approach does not 
use the proximity information provided by the embedding vectors. It simply computes a dot 
product between two sparse histograms after an $L_2$ normalization of the weights. 
The complexity of computing BoW cosine similarity between one query histogram and $n$ 
database histograms is $O(nh)$. The WCD measure, on the other hand, is closely related 
to document embedding techniques. For each document, it first computes a centroid vector of 
size $m$, which is a weighted average of the embedding vectors, and then determines  
the Euclidean distances between these centroid vectors. The complexity of 
computing WCD between one query document and $n$ database documents is $O(nm)$.  


We compared our distance measures with state-of-the-art EMD approximations as well, 
such as WMD and Sinkhorn distance. WMD uses the FastEMD library\footnote{https://github.com/LeeKamentsky/pyemd} to approximate the EMD,
which uses a thresholding technique~\cite{Pele2009} to reduce the time to compute EMD. 
In addition, WMD uses an RWMD-based pruning technique to reduce the number of calls to FastEMD~\cite{kusnerskw15}.
To improve the WMD performance further, we developed a multi-threaded CPU implementation of the pruning technique.
We also compared our methods with Cuturi's open-source implementation
of Sinkhorn's algorithm\footnote{http://marcocuturi.net/SI.html}, which can be executed on both CPUs and GPUs. 
We used $\lambda=20$ as the entropic regularization parameter because it offered 
a good trade-off between the accuracy and the speed of Sinkhorn's algorithm.

We have developed GPU-accelerated implementations of the WCD, RWMD, OMR, ACT, and BoW cosine similarity 
methods and evaluated their performance on an NVIDIA\textsuperscript{\textregistered} GTX 1080Ti GPU. 
Cuturi's Sinkhorn implementation has also been executed on the same GPU.  
Our multithreaded WMD implementation has been deployed on an 8-core Intel\textsuperscript{\textregistered} i7-6900K CPU. 
Top-$\ell$ calculations have been performed on the same Intel\textsuperscript{\textregistered} i7-6900K CPU in all cases.

Theorem~\ref{theorem:ordering} states that the more complex the considered algorithms, 
the smaller the gap to the EMD and, hence, the better the accuracy. The least complex 
ACT algorithm is the RWMD, which corresponds to the ACT-0 with zero iterations 
in Phase 2 (see Fig.~\ref{fig:lc-ict}). The second most complex algorithm is the OMR. 
The third most complex ACT algorithm is ACT-1 with a single iteration in Phase 2.
The most complex ACT algorithm we considered was ACT-15 with 15 iterations in Phase 2. 
Our experiments show that the search accuracy improves with the complexity and, 
thus, illustrate the accuracy vs complexity trade-off. Typically, most of the 
improvement in the search accuracy is achieved by the first iteration of Phase 2,
and subsequent iterations result in a limited improvement only. As a result, 
ACT-1 offers very favorable accuracy and runtime combinations. 


Figure~\ref{fig:combined} (a) shows the accuracy and runtime trade-offs between 
different methods on the 20 Newsgroups dataset. Note that even though ACT-1 is 
approximately 20-fold slower than BoW cosine similarity, it offers a 4.5\% to 7.5\% higher search accuracy.
Typically, the accuracy improvement with respect to BoW becomes larger as we increase $\ell$. 
Fig.~\ref{fig:combined} (a) also shows that ACT-1 is approximately 20000-fold faster than WMD, 
but offers a similar search accuracy. It is only 30\% slower than RWMD, but results in a 2\% to 3.5\% 
higher search accuracy. OMR is somewhere between RWMD and ACT-1 in terms of both runtime and search accuracy.
Finally, ACT-7 is approximately 10000-fold faster than WMD, and offers a slightly higher search accuracy! 

\begin{figure}[tb!]
  \centering%
  \vspace{-0.03in}
  \includegraphics*[width=1.0\linewidth]{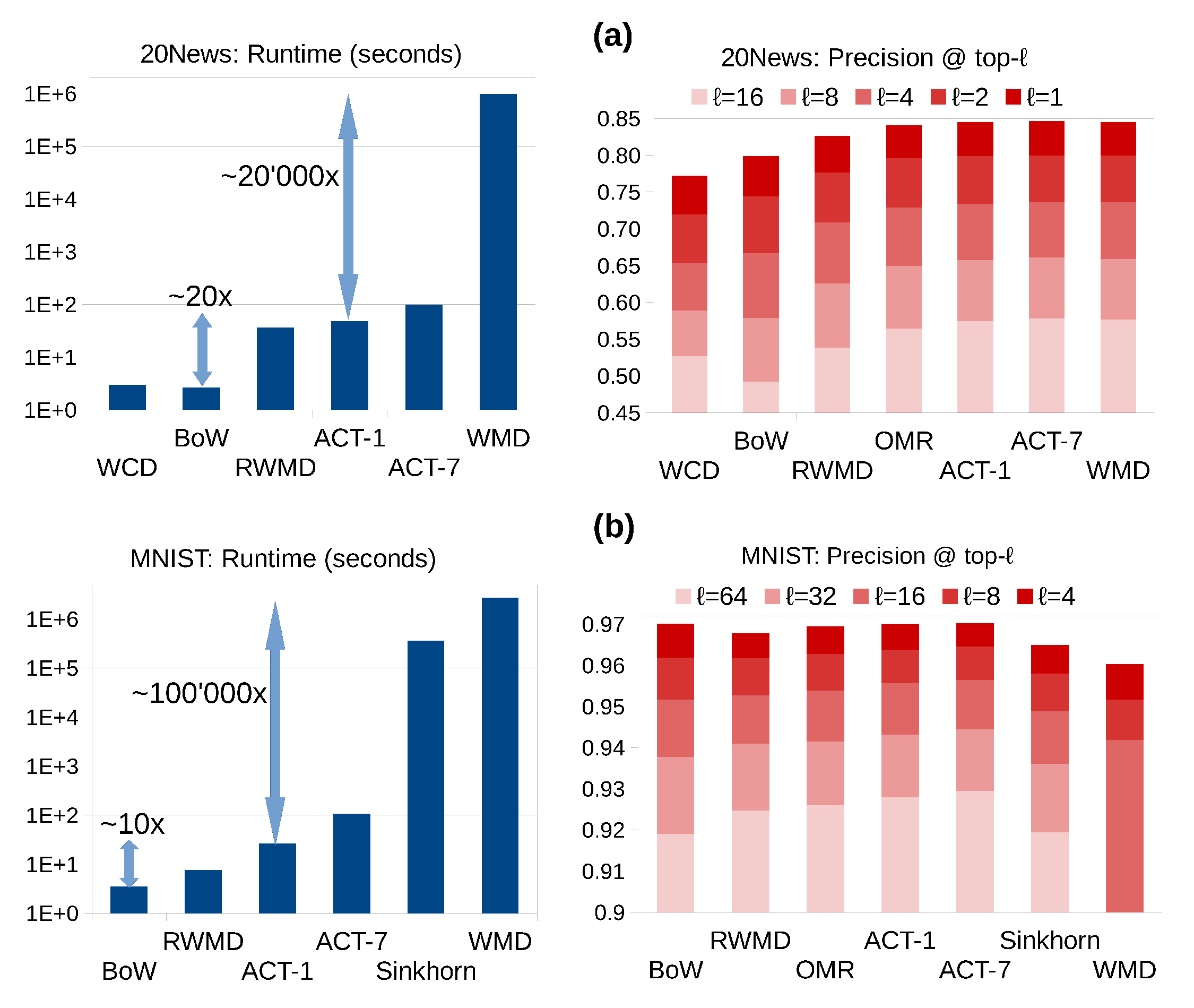}
  \vspace{-0.3in}
  \caption{Runtime vs accuracy for 20News and the MNIST subset}
  \vspace{-0.17in}
  \label{fig:combined}
\end{figure}

In case of MNIST, because the number of dimensions is small 
($m=2$), RWMD is almost as fast as BoW cosine similarity. However, 
the runtime of the Phase 2 of the ACT-1 method is much more significant 
than that of its Phase 1. Therefore, the runtime increase with respect 
to BoW is around ten fold for ACT-1. Nevertheless, when using ACT-1, computing
all pairwise distances between 60000 MNIST training images (i.e., 3.6 billion 
distance computations) takes only 3.3 minutes. The accuracy comparisons between
BoW, RWMD, and ACT methods for the complete MNIST database are given in Tab.~\ref{tab:mnist_wo_bg}.
The accuracy is already very high when using BoW because the images are normalized and centered. 
Our methods are comparable to BoW for small $\ell$, but outperform it for large enough $\ell$.

Computing all pairwise distances for the complete set of MNIST training images 
would take months when using WMD and Sinkhorn's algorithm. To enable comparisons 
with these two methods, we have set up a simpler experiment. We used only 
the first 6000 MNIST training images as our query documents and compared them 
with the complete set of 60000 MNIST training images. The results are given 
in Fig.~\ref{fig:combined} (b). We observe that ACT-1 is four orders of magnitude 
faster than Sinkhorn's algorithm when running on the same GPU, yet it achieves a higher search accuracy! 
Similarly, ACT-1 is five orders of magnitude faster than WMD while achieving a higher search accuracy! 
These results show that the OMR and the ACT measures we proposed are meaningful on their own, and using more 
complex measures, such as WMD or Sinkhorn does not necessarily improve the accuracy of nearest-neighbors search.



Table~\ref{tab:mnist} illustrates the sensitivity of RWMD to a minor 
change in the data representation. Here, we simply explore the impact 
of including the background (i.e. the black pixels) in the MNIST 
histograms. The most immediate result is that when comparing two 
histograms, all their coordinates overlap. As a result, the distance 
computed between the histograms by RWMD is always equal to zero, and 
the top-$\ell$ nearest neighbors are randomly selected, resulting in a 
precision of 10\% for RWMD. The OMR technique solves this problem 
immediately even though its accuracy is lower than that of BoW cosine 
similarity. In fact, several iterations of ACT are required to 
outperform BoW. However, these results demonstrate
the improved robustness and effectiveness of our methods in comparison to RWMD.

\begin{table}[t!]
\caption{Precision @ top-$\ell$ for MNIST (without background)}
\label{tab:mnist_wo_bg}
\begin{center}
\begin{tabular}{lccccc}
  \hline
  $\ell$      & BoW & RWMD & ACT-1 & ACT-3 & ACT-7 \\ 
  \hline
  1           & 0.9771 & 0.9752 & 0.9776 & 0.9780 & 0.9781 \\
  16          & 0.9480 & 0.9481 & 0.9510 & 0.9520 & 0.9521 \\
  128         & 0.8874 & 0.8963 & 0.8997 & 0.9014 & 0.9016 \\
  \hline
  \vspace{-0.4in}
\end{tabular}
\end{center}
\end{table}

\begin{table}[t!]
\caption{Precision @ top-$\ell$ for MNIST (with background)}
\label{tab:mnist}
\begin{center}
\begin{tabular}{lccccc}
  \hline
  $\ell$      & BoW & RWMD & OMR & ACT-7 & ACT-15 \\ 
  \hline
  1           & 0.9771 & 0.1123 & 0.9707 & 0.9756 & 0.9783\\
  16          & 0.9480 & 0.1002 & 0.9368 & 0.9470 & 0.9520\\
  128         & 0.8874 & 0.1002 & 0.8692 & 0.8872 & 0.8999\\
  \hline
  \vspace{-0.4in}
\end{tabular}
\end{center}
\end{table}

\section{Conclusions}

This paper offers new theoretical and practical results for improving 
the efficiency and the accuracy of approximate EMD computation in both
high and low dimensions. 
We identify the shortcomings of the RWMD measure and propose improved lower 
bounds that result in a higher nearest-neighbors-search accuracy and robustness without 
increasing the computational complexity significantly. Under realistic assumptions, the 
complexity of our methods scale linearly in the size of the input probability distributions. 
In addition, our methods are data-parallel and well-suited for GPU acceleration. 
The experiments demonstrate a four orders of magnitude improvement 
of the performance without any loss of nearest-neighbors-search accuracy 
with respect to WMD on high-dimensional text datasets. 
Similar improvements have been achieved with 
respect to Sinkhorn's algorithm on two-dimensional image datasets.


\section*{Acknowledgements}
We would like to thank Celestine D\"{u}nner, Haralampos Pozidis, Ahmet Solak, and Slavisa Sarafijanovic from IBM Research -- Zurich, Marco Cuturi from Google Brain and Institut Polytechnique de Paris, and the anonymous reviewers of ICML 2019 Conference for their valuable comments.


\bibliographystyle{icml2019}

\pagebreak
\onecolumn

\pagebreak

\appendix

\section{Optimality and Effectiveness} \label{sec:optimality}


Alg.~\ref{alg:ict} computes an optimum flow $\vect F^*$, whose components are determined by the quantities $r$ in step 4. Namely, the components of the $i$-th row of $\vect F^*$, are given recursively as $F^*_{i,\vect s[1]} = \min(p_i, q_{\vect s[1]})$ and $F^*_{i,\vect s[l]} = \min(p_i - \sum_{u=1}^{l-1} F^*_{i,\vect s[u]}, q_{\vect s[l]})$ for $l=2, \ldots, h_q$.

\begin{lemma} \label{lemma:ict}
  Each row $i$ of the flow $\vect F^*$ of Algorithm~\ref{alg:ict} has a certain number $k_i$, $1\leq k_i\leq h_q$ of nonzero components, which are given by
  $F^*_{i,\vect s[l]} = q_{\vect s[l]}$ for $l=1,\ldots, k_i-1$ and
  $F^*_{i,\vect s[k_i]} = p_i - \sum_{l=1}^{k_i-1}q_{\vect s[l]}$. 
\end{lemma}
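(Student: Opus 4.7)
\bigskip

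The plan is to define $k_i$ explicitly as the iteration at which the \textbf{while} loop of Algorithm~\ref{alg:ict} terminates for row $i$, then read off the structure from the recursion given in the introductory paragraph of the appendix. Concretely, let $R^{(0)}_i = p_i$ and $R^{(l)}_i = p_i - \sum_{u=1}^{l} F^*_{i,\vect s[u]}$ denote the residual mass at row $i$ after $l$ transfers, and set
\[
k_i \;=\; \min\bigl\{\, l \in \{1,\ldots,h_q\} \;:\; R^{(l-1)}_i \leq q_{\vect s[l]} \,\bigr\}.
\]
I would first verify that this set is nonempty, hence $k_i$ is well-defined. Since the histograms are $L_1$-normalized, $p_i \leq 1 = \sum_{l=1}^{h_q} q_{\vect s[l]}$, so there must be some $l \leq h_q$ at which the running residual no longer exceeds $q_{\vect s[l]}$ (otherwise one would obtain the contradiction $p_i > \sum_{l=1}^{h_q} q_{\vect s[l]} = 1$). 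This gives $1 \leq k_i \leq h_q$.

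Next I would handle the three ranges of $l$ separately. For $l < k_i$, by minimality of $k_i$ we have $R^{(l-1)}_i > q_{\vect s[l]} > 0$, so the recursion yields $F^*_{i,\vect s[l]} = \min(R^{(l-1)}_i, q_{\vect s[l]}) = q_{\vect s[l]}$; in particular all these components are strictly positive, and a short induction shows $R^{(l)}_i = p_i - \sum_{u=1}^{l} q_{\vect s[u]}$. For $l = k_i$, the defining inequality gives $F^*_{i,\vect s[k_i]} = \min(R^{(k_i-1)}_i, q_{\vect s[k_i]}) = R^{(k_i-1)}_i = p_i - \sum_{u=1}^{k_i-1} q_{\vect s[u]}$, matching the claimed formula. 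Moreover this value is nonnegative, and it is strictly positive provided $k_i=1$ or $\sum_{u<k_i} q_{\vect s[u]} < p_i$, which follows from the minimality of $k_i$ applied at step $k_i - 1$.

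Finally, for $l > k_i$ I would observe that $R^{(k_i)}_i = R^{(k_i-1)}_i - F^*_{i,\vect s[k_i]} = 0$, so the recursion forces $F^*_{i,\vect s[l]} = \min(0, q_{\vect s[l]}) = 0$ for all subsequent indices. Hence the nonzero components of row $i$ are exactly those with indices $\vect s[1], \ldots, \vect s[k_i]$, with the values asserted in the lemma. The only subtle point in the whole argument is ensuring that $k_i$ is well-defined and at most $h_q$; the $L_1$-normalization hypothesis (which is in force throughout Section~2) handles this cleanly, and no further ingredients are required.
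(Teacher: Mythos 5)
Your proof is correct and follows essentially the same route as the paper, which simply asserts that the lemma "follows by keeping track of the values of the term $r$" in the while loop; you have made that bookkeeping explicit by introducing the residuals $R^{(l)}_i$, defining $k_i$ as the first index where the residual fits into $q_{\vect s[l]}$, and using $L_1$-normalization to guarantee $k_i \leq h_q$. No further comparison is needed.
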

The Lemma follows by keeping track of the values of the term $r$ in step~4 in Alg.~\ref{alg:ict}.
An immediate implication is that the flow $F^*$ satisfies the constraints (\ref{eq:outflow}) and  (\ref{eq:relaxed}). 
One can also show that $F^*$ is a minimal solution of (\ref{eq:objective}) under the constraints (\ref{eq:outflow}) and  (\ref{eq:relaxed}),
and this leads to the following theorem.

\begin{theorem}
  (i) The flow $F^*$ of Algorithm~\ref{alg:ict} is an optimal solution of the relaxed minimization problem given by (\ref{eq:objective}), (\ref{eq:outflow}) and (\ref{eq:relaxed}).
  (ii) ICT provides a lower bound on EMD.
\end{theorem}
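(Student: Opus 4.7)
The plan is to exploit the key structural feature of the relaxation: replacing the in-flow constraint (\ref{eq:inflow}) by the per-edge constraint (\ref{eq:relaxed}) removes the only coupling across different source indices $i$ in the original EMD problem. Once the only remaining constraints are (\ref{eq:outflow}), (\ref{eq:relaxed}), and nonnegativity, the objective $\sum_{i,j} F_{i,j} C_{i,j}$ decomposes into $h_p$ independent row problems, each of the form: minimize $\sum_j F_{i,j} C_{i,j}$ over $F_{i,\cdot}$ subject to $\sum_j F_{i,j} = p_i$ and $0 \leq F_{i,j} \leq q_j$. Part (i) therefore reduces to showing that, for each row $i$, the assignment described in Lemma~\ref{lemma:ict} solves this row subproblem optimally.

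For the row subproblem, I would use a standard exchange argument. Sorting the destinations by ascending $C_{i,\cdot}$ as in the permutation $\vec s$, the greedy rule saturates the cheapest capacities first until the out-flow $p_i$ is exhausted, which is precisely the $F^*$ described in Lemma~\ref{lemma:ict}. Given any competing feasible flow $\tilde F$ for row $i$, I would find the smallest $l\leq k_i$ at which $\tilde F_{i,\vec s[l]}<F^*_{i,\vec s[l]}$; by the out-flow equality there must be some $l'>l$ with $\tilde F_{i,\vec s[l']}>0$, and I would transfer a mass $\delta = \min(F^*_{i,\vec s[l]}-\tilde F_{i,\vec s[l]},\,\tilde F_{i,\vec s[l']})$ from position $\vec s[l']$ to $\vec s[l]$. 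This transfer preserves (\ref{eq:outflow}) and (\ref{eq:relaxed}) since the cap at $\vec s[l]$ is $q_{\vec s[l]}\geq F^*_{i,\vec s[l]}$, and it changes the cost by $\delta\bigl(C_{i,\vec s[l]}-C_{i,\vec s[l']}\bigr)\leq 0$ because $\vec s$ is sorted in ascending order of $C_{i,\cdot}$. Iterating the exchange converts $\tilde F$ into $F^*$ without ever increasing cost, so $F^*$ is optimal for each row, and combining the rows yields (i).

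Part (ii) is then immediate from set inclusion of feasible regions. Any EMD-feasible flow satisfies (\ref{eq:outflow}) and (\ref{eq:inflow}); combined with nonnegativity, (\ref{eq:inflow}) gives $F_{i,j}\leq \sum_{i'} F_{i',j}=q_j$, i.e.\ (\ref{eq:relaxed}). Therefore every EMD-feasible flow is ICT-feasible, and minimizing the common objective over a superset yields $\mathrm{ICT}(\vect p,\vect q)\leq \mathrm{EMD}(\vect p,\vect q)$.

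The only genuinely substantive step is the row-wise exchange argument; the decomposition observation and the lower-bound inclusion are essentially bookkeeping. The main pitfall to watch is that the exchange preserves the capacity constraint at the destination receiving mass, which is why it is crucial to pick the smallest violated index $l\leq k_i$ rather than an arbitrary one — at such $l$ the target level $F^*_{i,\vec s[l]}\leq q_{\vec s[l]}$ guarantees feasibility after the shift.
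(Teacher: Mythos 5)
Your proposal is correct, and it shares the paper's overall skeleton --- the observation that dropping (\ref{eq:inflow}) in favour of (\ref{eq:relaxed}) decouples the rows, the reduction to a per-row capacitated subproblem whose greedy solution is the $F^*$ of Lemma~\ref{lemma:ict}, and the derivation of part (ii) from inclusion of feasible regions --- but it proves the key row-optimality step by a different technique. You use an exchange argument: starting from an arbitrary feasible competitor $\tilde F$, repeatedly shift mass $\delta$ from a more expensive destination $\vec s[l']$ to the cheapest not-yet-saturated one $\vec s[l]$, never increasing the cost, until $\tilde F$ coincides with $F^*$. The paper instead compares $F$ and $F^*$ in one shot via a chain of inequalities: it lower-bounds the cost of the tail mass $\sum_{l\geq k_i}C_{i,\vec s[l]}F_{i,\vec s[l]}$ by $C_{i,\vec s[k_i]}$ times that mass, rewrites the tail mass using the out-flow equality, and redistributes the resulting terms back onto the cheaper indices using $F^*_{i,\vec s[l]}-F_{i,\vec s[l]}\geq 0$ for $l<k_i$. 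Your route is arguably more transparent about \emph{why} greedy wins (each swap is locally non-increasing) but carries the extra obligations of verifying that the receiving destination stays within its cap --- which you correctly identify as the reason for choosing the smallest violated index --- and that the iteration terminates (e.g., each swap either saturates index $l$ to its $F^*$ level or empties index $l'$, so finitely many swaps suffice; this is worth stating explicitly). The paper's algebraic chain avoids any termination argument at the price of being less motivated. Both are complete and valid; part (ii) is identical in the two treatments.
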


\begin{proof}

Proof of part (i): It has already been shown that the flow $\vect F^*$ satisfies constraints (\ref{eq:outflow}) and (\ref{eq:relaxed}), and it remains to show that $\vect F^*$ achieves the minimum in (\ref{eq:objective}). To this end, let  $\vect F$ be any nonnegative flow, which satisfies (\ref{eq:outflow}) and (\ref{eq:relaxed}). To show that  $\vect F^*$ achieves the minimum in (\ref{eq:relaxed}), it is enough to show that for every row $i$, one has
$\sum_{j} F_{i,j} C_{i,j} \geq \sum_{j} F^*_{i,j} C_{i,j}$, which then implies $\sum_{i,j} F_{i,j} C_{i,j} \geq \sum_{i,j} F^*_{i,j} C_{i,j}$.

By Alg.~\ref{alg:ict}, there is a reordering given by the list $\vect s$ such that
\begin{equation} \label{eq:proof1}
C_{i,\vect s[1]}\leq C_{i,\vect s[2]}\leq \ldots \leq C_{i,\vect s[n_q]}.
\end{equation}
By Lemma~\ref{lemma:ict}, there is a $k_i\leq n_q$ such that $\sum_{l=1}^{k_i} F^*_{i,\vect s[l]} = p_i$ and  $F^*_{i,\vect s[l]} = 0$ for $l>k_i$. Furthermore by Lemma~\ref{lemma:ict} and by constraint (\ref{eq:relaxed}) on $\vect F$ , it follows that
\begin{equation} \label{eq:proof2}
F_{i,\vect s[l]} \leq q_{\vect s[l]} = F^*_{i,\vect s[l]} \quad \text{for $l=1, \ldots, k_i-1$}.
\end{equation}
The outflow-constraint (\ref{eq:outflow}) implies
$\sum_{j} F_{i,j} = p_i = \sum_{j} F^*_{i,j}$ or, equivalently,
\begin{equation} \label{eq:proof3}
\sum_{l=k_i}^{n_q} F_{i,\vect s[l]} =
                   F^*_{i,\vect s[k_i]} + \sum_{l=1}^{k_i-1}(F^*_{i,\vect s[l]} - F_{i,\vect s[l]}).
\end{equation}

In the following chain of inequalities, the first inequality follows from (\ref{eq:proof1}), and  (\ref{eq:proof3}) implies the equality in the second step.
\begin{eqnarray*}
\sum_{l=k_i}^{n_q} C_{i,\vect s[l]}F_{i,\vect s[l]}
&{\geq}&C_{i,\vect s[k_i]}\sum_{l=k_i}^{n_q} F_{i,\vect s[l]} \\
  &{=}&C_{i,\vect s[k_i]}(F^*_{i,\vect s[k_i]} + \sum_{l=1}^{k_i-1}(F^*_{i,\vect s[l]} - F_{i,\vect s[l]})) \\
  &{=}&C_{i,\vect s[k_i]}F^*_{i,\vect s[k_i]} + \sum_{l=1}^{k_i-1}C_{i,\vect s[k_i]}(F^*_{i,\vect s[l]} - F_{i,\vect s[l]}) \\
  &{\geq}&C_{i,\vect s[k_i]}F^*_{i,\vect s[k_i]} + \sum_{l=1}^{k_i-1}C_{i,\vect s[l]}(F^*_{i,\vect s[l]} - F_{i,\vect s[l]}).
\end{eqnarray*}
The inequality in the last step follows from (\ref{eq:proof1}) and the fact that the terms
$F^*_{i,\vect s[l]} - F_{i,\vect s[l]}$ are nonnegative by (\ref{eq:proof2}). By rewriting the last inequality, one obtains the desired inequality
\begin{eqnarray*}
\sum_{j} F_{i,j} C_{i,j} &=& \sum_{l=1}^{n_q} F_{i,\vect s[l]} C_{i,\vect s[l]} \\
&\geq& \sum_{l=1}^{k_i} F^*_{i,\vect s[l]} C_{i,\vect s[l]} \\
&=& \sum_{j} F^*_{i,j} C_{i,j} , 
\end{eqnarray*}
where in the last equation  $F^*_{i,\vect s[l]} = 0$ for $l>k_i$ is used.

Proof of part (ii): Since ICT is a relaxation of the constrained minimization problem of the EMD, ICT provides a lower bound on EMD given by the output of Alg.~\ref{alg:ict}, namely, $\sum_{i,j} F^*_{i,j} C_{i,j} = \text{ICT}(\vect p, \vect q) \leq \text{EMD}(\vect p, \vect q)$.

\end{proof}

Similar to Alg.~\ref{alg:ict}, Alg.~\ref{alg:aict} also determines an optimum flow $F^*$, which now depends on the number of iterations $k$. 
\begin{lemma} \label{lemma:aict}
  Each row $i$ of the flow $\vect F^*$ of Algorithm~\ref{alg:aict} has a certain number $k_i$, $1\leq k_i\leq k$ of nonzero components, which are given by
  $F^*_{i,\vect s[l]} = q_{\vect s[l]}$ for $l=1,\ldots, k_i-1$ and
  $F^*_{i,\vect s[k_i]} = p_i - \sum_{l=1}^{k_i-1}q_{\vect s[l]}$. 
\end{lemma}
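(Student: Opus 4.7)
The plan is to mimic the short tracking argument that establishes Lemma~\ref{lemma:ict}, keeping a running tally of the successive values of $r$ assigned inside Algorithm~\ref{alg:aict}, but now accounting for the fact that the while-loop runs for at most $k-1$ passes and is followed by a terminal \textbf{if}-branch that dumps any residual mass of $p_i$ onto $\vect s[k]$. I would fix a row index $i$ and assume without loss of generality that $p_i>0$ initially, since otherwise the row is identically zero. I would then define $k_i$ operationally as the smallest iteration $l \in \{1,\ldots,k-1\}$ at which the update $p_i \leftarrow p_i - r$ first drives the residual to zero, and set $k_i = k$ if no such $l$ exists. By construction $1 \leq k_i \leq k$.

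A straightforward induction on $l$ then yields that for each $l < k_i$ the residual at the \emph{start} of iteration $l$ equals $p_i - \sum_{u=1}^{l-1} q_{\vect s[u]}$, which by the minimality of $k_i$ strictly exceeds $q_{\vect s[l]}$. Hence $r = \min(p_i, q_{\vect s[l]}) = q_{\vect s[l]}$ in that iteration, so $F^*_{i,\vect s[l]} = q_{\vect s[l]}$, matching the first half of the claimed formula.

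For the terminal component $F^*_{i,\vect s[k_i]}$ I would split into two cases. If $k_i < k$, then at iteration $k_i$ inside the while-loop the residual $p_i - \sum_{u=1}^{k_i-1} q_{\vect s[u]}$ is at most $q_{\vect s[k_i]}$, so $r$ is that residual itself; it becomes $F^*_{i,\vect s[k_i]}$, the remaining loop iterations contribute zero because the residual is now zero, and the concluding \textbf{if}-branch is skipped since its guard $p_i \neq 0$ fails. If instead $k_i = k$, the loop finishes with the strictly positive residual $p_i - \sum_{u=1}^{k-1} q_{\vect s[u]}$, and the \textbf{if}-branch transfers exactly this residual to $\vect s[k]$, giving $F^*_{i,\vect s[k]} = p_i - \sum_{u=1}^{k-1} q_{\vect s[u]}$.

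I do not anticipate any substantive obstacle; the only care needed is the case distinction at the terminal index and verifying that once the residual hits zero, every remaining $\min(p_i, q_{\vect s[l]})$ vanishes and the concluding \textbf{if}-branch does nothing. No inequalities beyond those packaged into the very definition of $k_i$ are required, so the lemma reduces to a clean bookkeeping exercise parallel to that in Lemma~\ref{lemma:ict}.
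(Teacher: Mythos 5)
Your proof is correct and follows essentially the same route as the paper, which for this lemma (and its ICT counterpart) only gestures at ``keeping track of the values of the term $r$''; your explicit definition of $k_i$, the induction showing the residual strictly exceeds $q_{\vect s[l]}$ for $l<k_i$, and the case split between the loop terminating the mass early versus the final \textbf{if}-branch absorbing the remainder is exactly the bookkeeping the authors intend.
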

Based on this Lemma, one can show that the flow $F^*$ from Algorithm~\ref{alg:aict} is an optimum solution to the minimization problem given by (\ref{eq:objective}), (\ref{eq:outflow}) and (\ref{eq:relaxed}), in which the constraint (\ref{eq:relaxed}) is further relaxed in function of the predetermined parameter $k$. Since the constrained minimization problems for ICT, ACT, OMR, RWMD form a chain of increased relaxations of EMD, one obtains the following result. 

\begin{theorem} 
  For two normalized histograms $\vect p$ and $\vect q$:
  $\text{RWMD}(\vect{p},\vect{q}) \leq  \text{OMR}(\vect{p},\vect{q}) \leq \text{ACT}(\vect{p},\vect{q}) \leq \text{ICT}(\vect{p},\vect{q}) \leq \text{EMD}(\vect{p},\vect{q})$.
\end{theorem}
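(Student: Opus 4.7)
The plan is to exhibit each of the four distances as the optimal value of a minimum-transportation-cost problem sharing the objective (\ref{eq:objective}) and the nonnegativity and out-flow constraints (\ref{eq:outflow}), but differing in the strength of the in-flow-type constraint. Since all four problems minimize the same linear functional over nested feasible sets, the ordering of optimal values follows immediately from the reverse ordering of the feasible regions: larger feasible set yields smaller minimum. Concretely, I would identify the following chain of relaxations, from most restrictive to least:
\begin{itemize}
\itemsep0em
\item EMD: (\ref{eq:outflow}) and (\ref{eq:inflow}).
\item ICT: (\ref{eq:outflow}) and (\ref{eq:relaxed}) for \emph{all} $(i,j)$.
\item ACT: (\ref{eq:outflow}) and (\ref{eq:relaxed}) only for $j\in\{\vect s[1],\ldots,\vect s[k-1]\}$ in row $i$ (i.e.\ only for the $k-1$ cheapest destinations per source).
\item OMR: (\ref{eq:outflow}) and (\ref{eq:relaxed}) only at overlapping coordinates, i.e.\ for $(i,j)$ with $C_{i,j}=0$.
\item RWMD: (\ref{eq:outflow}) alone.
\end{itemize}

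Next, I would verify the four containments of feasible sets, each of which is essentially a one-line check. First, any flow satisfying (\ref{eq:inflow}) trivially satisfies (\ref{eq:relaxed}), so the EMD-feasible set is contained in the ICT-feasible set; this is already established by part (ii) of Theorem~\ref{thm:ict}, giving $\text{ICT}\le\text{EMD}$. Second, imposing (\ref{eq:relaxed}) for all $(i,j)$ is strictly stronger than imposing it only on the top-$(k-1)$ destinations per row, so ICT-feasible $\subseteq$ ACT-feasible, yielding $\text{ACT}\le\text{ICT}$. Third, because $C_{i,j}=0$ is the minimum possible value of a nonnegative cost, any overlapping coordinate $(i,j)$ is among the top-$k$ smallest entries in row $i$ for every $k\ge 1$; hence ACT's capacity constraints include all of OMR's, and the ACT-feasible set is contained in the OMR-feasible set, giving $\text{OMR}\le\text{ACT}$. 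Finally, OMR-feasible $\subseteq$ RWMD-feasible since RWMD imposes no capacity constraints at all, yielding $\text{RWMD}\le\text{OMR}$. Chaining these four inequalities gives the claim.

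I would also need to confirm that the algorithmic outputs of OMR, ACT, and ICT (Algorithms~\ref{alg:omr}, \ref{alg:aict}, \ref{alg:ict}) actually attain the minima of their respective relaxed problems, so that the inequalities between algorithmic outputs coincide with inequalities between optimal values. For ICT this is Theorem~\ref{thm:ict}(i), and for ACT the analogous statement is noted in the appendix via Lemma~\ref{lemma:aict}. For OMR a similar row-wise optimality argument applies, since when $C_{i,\vect s[1]}=0$ the row subproblem of OMR is precisely the two-destination instance of ACT. Once these optimality identifications are in place, the ordering of minima transfers directly to the distances.

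The only mildly subtle point, and the one I would treat with the most care, is the containment OMR-constraints $\subseteq$ ACT-constraints when multiple destinations share a cost tied with zero: one must note that the argument is independent of how ties in $\text{argmin}_k$ are broken, since for any tie-breaking rule every zero-cost index still lies within the top-$k$ smallest values. Aside from this bookkeeping, every step of the proof reduces to "relaxing a constraint cannot increase the minimum", so no further calculation is required.
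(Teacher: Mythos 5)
Your proposal is correct and follows essentially the same route as the paper: the appendix proves this theorem in one line by observing that the constrained minimization problems for ICT, ACT, OMR, and RWMD form a chain of increasingly relaxed versions of EMD, with the algorithms' optimality for their respective relaxations supplied by Lemma~\ref{lemma:ict}, Lemma~\ref{lemma:aict}, and Theorem~\ref{thm:ict}. Your write-up simply makes the nested-feasible-set containments and the tie-breaking bookkeeping explicit, which the paper leaves implicit.
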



We call a nonnegative cost function $\vect C$ {\it effective}, if for any indices
$i, j$, the equality $C_{i,j}= 0$ implies $i = j$. For a topological
space, this condition is related to the Hausdorff property. For
an effective cost function $\vect C$, one has $C_{i,j} > 0$ for all $i \neq j$,
and, in this case, $\text{OMR}(\vect p, \vect q) = \sum_{i,j}C_{i,j}F^*_{i,j}=0$ implies $F^*_{i,j}=0$ for $i \neq j$ and, thus, $k_i=1$ in
Lemma~\ref{lemma:aict} and, thus, $\vect F^*$ is diagonal with $F^*_{i,i}=p_i$. This implies $p_i\leq q_i$ for all $i$ and, since both histograms are normalized, one must have $\vect p = \vect q$.
\begin{theorem}
  If the cost function $\vect C$ is effective, then  $\text{OMR}(\vect p, \vect q) = 0$ implies $\vect p = \vect q$, i.e., OMR is effective.
\end{theorem}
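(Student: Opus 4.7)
The plan is to unpack the output of Algorithm~\ref{alg:omr} as a flow $\vect F^*$ on the bipartite graph, observe that an effective cost matrix forces this flow to be concentrated on the diagonal whenever its total cost vanishes, and then combine the outflow constraint (\ref{eq:outflow}) with the relaxed constraint (\ref{eq:relaxed}) and the $L_1$-normalization of the histograms to deduce $\vect p=\vect q$.

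First I would view OMR in the same flow-based formulation used for Algorithms~\ref{alg:ict} and~\ref{alg:aict}: each iteration $i$ of Algorithm~\ref{alg:omr} produces nonnegative entries $F^*_{i,\vect s[1]}$ and (possibly) $F^*_{i,\vect s[2]}$ summing to $p_i$, and all other entries of row $i$ are zero. Thus the constructed $F^*$ satisfies (\ref{eq:outflow}) and the OMR-version of (\ref{eq:relaxed}), and $\text{OMR}(\vect p,\vect q)=\sum_{i,j}C_{i,j}F^*_{i,j}$. Since every term $C_{i,j}F^*_{i,j}$ is nonnegative, assuming $\text{OMR}(\vect p,\vect q)=0$ forces $C_{i,j}F^*_{i,j}=0$ for every pair $(i,j)$.

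Second, I would invoke effectiveness: the hypothesis $C_{i,j}=0\Rightarrow i=j$ is equivalent to $C_{i,j}>0$ for all $i\neq j$. Combined with the previous step, this gives $F^*_{i,j}=0$ whenever $i\neq j$, so $\vect F^*$ is diagonal. The outflow constraint (\ref{eq:outflow}) then collapses to $F^*_{i,i}=p_i$, playing the role of the $k_i=1$ case of Lemma~\ref{lemma:aict}. Applying the OMR relaxed constraint at the overlapping coordinate (exactly where Algorithm~\ref{alg:omr} enforces $r\leq q_{\vect s[1]}$) yields $p_i=F^*_{i,i}\leq q_i$ for every $i$.

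Finally, I would close the argument with normalization: summing $p_i\leq q_i$ over $i$ gives $1=\sum_i p_i\leq \sum_i q_i=1$, so the inequalities are in fact equalities and $\vect p=\vect q$. The main obstacle I anticipate is bookkeeping rather than mathematics: to talk about "diagonal" flow one has to identify a common coordinate set for $\vect p$ and $\vect q$, so I would make explicit at the outset that effectiveness lets us relabel indices so that $C_{i,j}=0$ iff $i=j$, and I would verify that the two-step greedy flow produced by Algorithm~\ref{alg:omr} really does satisfy the constraints used above (in particular that when a row is forced entirely onto the diagonal, the algorithm indeed chooses $r=p_i\leq q_i$). Everything else is a direct consequence of nonnegativity and $L_1$-normalization.
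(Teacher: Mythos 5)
Your proposal is correct and follows essentially the same route as the paper: zero total cost plus $C_{i,j}>0$ for $i\neq j$ forces the OMR flow to be diagonal, the outflow constraint gives $F^*_{i,i}=p_i$, the relaxed constraint at the overlapping coordinate gives $p_i\leq q_i$, and $L_1$-normalization upgrades this to $\vect p=\vect q$. The paper phrases the diagonality step via the $k_i=1$ case of Lemma~\ref{lemma:aict}, which you also invoke, so there is no substantive difference.
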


\begin{remark}
If  $\text{OMR}$ is effective, then, a fortiori, ACT and ICT are also effective. However, RWMD does not share this property.
\end{remark}

\section{Complexity Analysis} \label{sec:complexity}


The algorithms presented in Section 3 assume that the cost matrix $\vect{C}$ is given, 
yet they still have a quadratic time complexity in the size of the histograms. 
Assume that the histograms size is $h$. Then, the size of $\vect{C}$ is $h^2$.
The complexity is determined by the row-wise reduction operations on $\vect{C}$. 
In case of the OMR method, the top-$2$ smallest values are computed in each row of $\vect{C}$ and a 
maximum of two updates are performed on each bin of $\vect{p}$. Therefore, the complexity is $O(h^2)$. 
In case of the ACT method, the top-$k$ smallest values are computed in each row, and  
up to $k$ updates are performed on each histogram bin. Therefore, the complexity is $O(h^2\log{k}+hk)$.
The ICT method is the most expensive one because 1) it fully sorts the rows of $\vect{C}$, and 2) it 
requires $O(h)$ iterations in the worst case. Its complexity is given by $O(h^2\log{h})$.


In Section~\ref{sec:linear-complexity}, the complexity of Phase 1 of the LC-ACT algorithm is $O(vhm + nh\log{k})$ 
because the complexity of the matrix multiplication that computes $\vect{D}$ is $O(vhm)$, and the complexity of 
computing top-$k$ smallest distances in each row of $\vect{D}$ is $O(nh\log{k})$. The complexity of performing (\ref{eq:derivey}), 
(\ref{eq:derivex}), (\ref{eq:derivet1}), and (\ref{eq:derivet2}) are $O(nh)$ each. When $k-1$ 
iterations of Phase 2 is applied, the overall time complexity of the LC-ACT algorithm is $O(vhm + knh)$.
Note that when the number of iterations $k$ performed by LC-ACT is a constant, LC-ACT and LC-RWMD 
have the same time complexity. When the number of iterations are in the order of the dimensionality 
of the coordinates (i.e., $O(k)=O(m)$) and the database is sufficiently large (i.e., $O(n)=O(v)$), LC-ACT 
and LC-RWMD again have the same time complexity, which increases linearly in the size of the histograms $h$. 
In addition, the sizes of the matrices $\vect{X}$, $\vect{V}$, $\vect{D}$, and $\vect{Z}$ are $nh$, $vm$, $vh$, and $vk$, respectively. 
Therefore, the overall space complexity of the LC-ACT algorithm is $O(nh+vm+vh+vk)$.

\end{document}